\definecolor{BrickRed}{rgb}{0.6,0,0}
\definecolor{RoyalBlue}{rgb}{0,0,0.8}
\definecolor{Tdgreen}{rgb}{0,0.4,0.7}
\definecolor{ForestGreen}{RGB}{34,139,34}
\newtheorem{hyp}{Hypothesis}
\newtheorem{theorem}{Theorem}
\title{Contrastive Learning for Neural Topic Model}
\author{%
 	Thong Nguyen \\
	VinAI Research\\
	\texttt{v.thongnt66@vinai.io} \\
	\And
	Luu Anh Tuan \thanks{Corresponding author} \\
	Nanyang Technological University \\
	\texttt{anhtuan.luu@ntu.edu.sg} \\

}
\begin{document}

\maketitle

\begin{abstract}
Recent empirical studies show that adversarial topic models (ATM) can successfully capture semantic patterns of the document by differentiating a document with another dissimilar sample. However, utilizing that discriminative-generative architecture has two important drawbacks: (1) the architecture does not relate similar documents, which has the same document-word distribution of salient words; (2) it restricts the ability to integrate external information, such as sentiments of the document, which has been shown to benefit the training of neural topic model. To address those issues, we revisit the adversarial topic architecture in the viewpoint of mathematical analysis, propose a novel approach to re-formulate discriminative goal as an optimization problem, and design a novel sampling method which facilitates the integration of external variables. The reformulation encourages the model to incorporate the relations among similar samples and enforces the constraint on the similarity among dissimilar ones; while the sampling method, which is based on the internal input and reconstructed output, helps inform the model of salient words contributing to the main topic. Experimental results show that our framework outperforms other state-of-the-art neural topic models in three common benchmark datasets that belong to various domains, vocabulary sizes, and document lengths in terms of topic coherence.
\end{abstract}

\section{Introduction}

Topic models have been successfully applied in Natural Language Processing with various applications such as information extraction, text clustering, summarization, and sentiment analysis \cite{lu2011investigating, subramani2018novel, tuan2020capturing, wang2019open, wang2021pandemic,nguyen2021enriching}. The most popular conventional topic model, Latent Dirichlet Allocation \cite{blei2003latent}, learns document-topic and topic-word distribution via Gibbs sampling and mean field approximation. 
To apply deep neural network for topic model, Miao et al. \cite{miao2017discovering} proposed to use neural variational inference as the training method while Srivastava and Sutton \cite{srivastava2017autoencoding} employed the logistic normal prior distribution.
However, recent studies \cite{wang2019atm, wang2020neural} showed that both Gaussian and logistic normal prior fail to capture multi-modality aspects and  semantic patterns of a document, which are crucial to maintain the quality of a topic model. 

To cope with this issue, Adversarial Topic Model (ATM) \cite{wang2019atm, wang2020neural, hu2020neural, nan2019topic} was proposed with adversarial mechanisms 
using a combination of generator and discriminator.
By seeking the equilibrium between the generator and discriminator, the generator is capable of learning meaningful semantic patterns of the document. Nonetheless, this framework has two main limitations. First, ATM relies on the key ingredient: leveraging the discrimination of the real distribution from the fake (negative) distribution to guide the training. Since the sampling of the fake distribution is not conditioned on the real distribution, it barely generates positive samples which largely preserves the semantic content of the real sample. This limits the behavior concerning the mutual information in the positive sample and the real one, which has been demonstrated as key driver to learn useful representations in unsupervised learning \cite{blum1998combining, xu2013survey, bachman2019learning, chen2020simple, tian2020makes}. Second, ATM takes random samples from a prior distribution to feed to the generator. Previous work \cite{card2017neural} has shown that incorporating additional variables, such as metadata or the sentiment, to estimate the topic distribution aids the learning of coherent topics. Relying on a pre-defined prior distribution, ATM hinders the integration of those variables. 

To address the above drawbacks, in this paper we propose a novel method to model the relations among samples without relying on the generative-discriminative architecture. In particular, we formulate the objective as an optimization problem that aims to move the representation of the input (or prototype) closer to the one that shares the semantic content, i.e., positive sample. We also take into account the relation of the prototype and the negative sample by forming an auxiliary constraint to enforce the model to push the representation of the negative farther apart from the prototype. Our mathematical framework ends with a contrastive objective, which will be jointly optimized with the evidence lower bound of neural topic model. 

Nonetheless, another challenge arises: how to effectively generate positive and negative samples under neural topic model setting? Recent efforts have addressed positive sampling strategies and methods to generate hard negative samples for images \cite{chuang2020debiased, robinson2020contrastive, chen2020big, tian2019contrastive}. However, relevant research to adapt the techniques to neural topic model setting has been neglected in the literature. In this work, we introduce a novel sampling method that mimics the way human being seizes the similarity of a pair of documents, which is based on the following hypothesis: 

\begin{hyp}
    \textit{The common theme of the prototype and the positive sample can be realized due to their relative frequency of salient words.}
    \vspace{-2mm}
\end{hyp}

We use the example in Fig. \ref{fig:example} to explain the idea of our method. Humans are able to tell the similarity of the input with positive sample due to the reason that the frequency of salient words such as \emph{``league''} and \emph{``teams"} is proportional to their counterpart in the positive sample. On the other hand, the separation between the input and the negative sample can be induced since those words in the input do not occur in negative sample, though they both contain words \emph{``billions"} and \emph{``dollars"}, which are not salient in the context of the input. Based on this intuition, our method generates the positive and negative samples for topic model by maintaining the weights of salient entries and altering those of unimportant ones in the prototype to construct the positive samples while performing the opposite procedure for the negative ones. Inherently, since our method is not depended on a fixed prior distribution to draw our samples, we are not restrained in incorporating external variables to provide additional knowledge for better learning topics. 

\begin{figure}[t]
    \centering
      \caption{Illustration of a document with one positive and negative pair.}
  \includegraphics[width=1.0\textwidth]{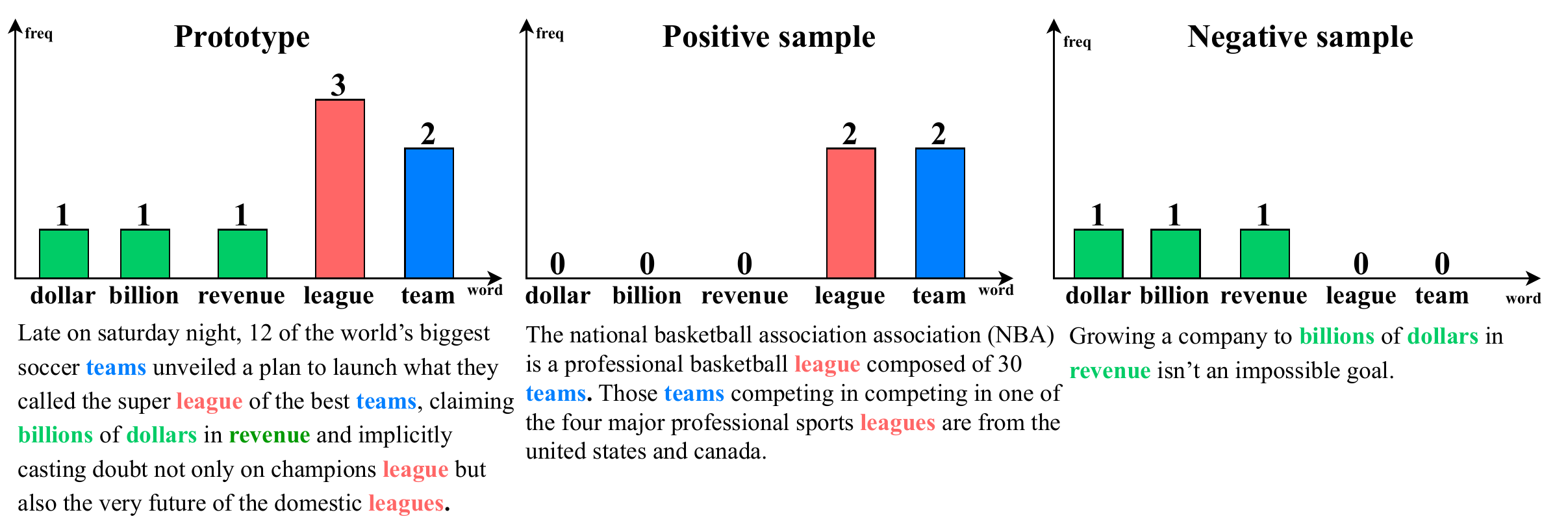} 
  \label{fig:example}
  \vspace{-10mm}
\end{figure}



In a nutshell, the contributions of our paper are as follows:\vspace{-3mm}
\begin{itemize}[leftmargin=*]
    \item We target the problem of capturing meaningful representations through modeling the relations among samples from a new mathematical perspective and propose a novel contrastive objective which is jointly optimized with evidence lower bound of neural topic model. We find that capturing the mutual information between the prototype and its positive samples provides a strong foundation for constructing coherent topics, while differentiating the prototype from the negative samples plays a less important role.
    \item We propose a novel sampling strategy that is motivated by human behavior when comparing different documents. By relying on the reconstructed output, we adapt the sampling to the learning process of the model, and produce the most informative samples compared with other sampling strategies.
    \item We conduct extensive experiments in three common topic modeling datasets and demonstrate the effectiveness of our approach by outperforming other state-of-the-art approaches in terms of topic coherence , on both global and topic-by-topic basis.
\end{itemize}

\section{Related Work}

\textbf{Neural Topic Model} (NTM) has been studied to encode a large set of documents using latent vectors. Inspired by Variational Autoencoder, NTM inherit most techniques from VAE-specific early works, such as reparameterization trick \cite{kingma2013auto} and neural variational inference \cite{rezende2014stochastic}. Subsequent works attempting to apply for topic model \cite{srivastava2017autoencoding, miao2016neural, miao2017discovering} focus on studying various prior distributions, e.g. Gaussian or logistic normal. Recently, researches directly target to improve topic coherence through formulating it as an optimizing objective \cite{ding2018coherence}, incorporating contextual language knowledge \cite{hoyle2020improving}, or passing external information, e.g. sentiment, group of documents, as input \cite{card2017neural}. Generating topics that are human-interpretable has become the goal of a wide variety of latest efforts.

\textbf{Adversarial Topic Model}  \cite{wang2019open} is a topic modeling approach that models the topics with GAN-based architecture. The key components in that architecture consist of a generator projecting randomly sampled document-topic distribution to gain the most realistic document-word distribution as possible and a discriminator trying to distinguish between the generated and the true sample \cite{wang2019atm, wang2020neural}. To better learn informative representations of a document, Hu et al. \cite{hu2020neural} proposed adding two cycle-consistent constraints to encourage the coordination between the encoder and generator.

\textbf{Contrastive Framework and Sampling Techniques} There are various efforts studying contrastive method to learn meaningful representations. For visual information, contrastive framework is applied for tasks such as image classification \cite{khosla2020supervised, hjelm2018learning}, object detection \cite{xie2021detco, sun2021fsce, amrani2019learning}, image segmentaion \cite{zhao2020contrastive, chaitanya2020contrastive, ke2021universal}, etc. Other applications different from image include adversarial training \cite{ho2020contrastive, miyato2018virtual, kim2020adversarial}, graph \cite{you2020graph, sun2019infograph, li2019graph, hassani2020contrastive}, and sequence modeling \cite{logeswaran2018efficient, oord2018representation, henaff2020data}. Specific positive sampling strategies have been proposed to improve the performance of contrastive learning, e.g. applying view-based transformations that preserve semantic content in the image \cite{chen2020big, chen2020simple, tian2020makes}.  On the other hand, there is a recent surge of interest in studying negative sampling methods. Chuang et al. \cite{chuang2020debiased} propose a debiasing method which is to correct the fact in false negative samples. For object detection, Jin et al. \cite{jin2018unsupervised} employ temporal structure of video to generate negative examples. Although widely studied, little effort has been made to adapt contrastive techniques to neural topic model. 

In this paper, we re-formulate our goal of learning document representations in neural topic model as a contrastive objective. The form of our objective is mostly related to Robinson et al. \cite{robinson2020contrastive}. However, there are two key differences: (1) As they use the weighting factor associated with the impact of negative sample as a tool to search for the distribution of hard negative samples,  we consider it as an adaptive parameter to control the impact of the positive and negative sample on the learning. (2) We regard the effect of positive sample as the main driver to achieve meaningful representations, while they exploit the impact of negative ones. Our approach is more applicable to topic modeling, as proven in the investigation into human behavior of distinguishing among documents. 
\section{Methodology}
\subsection{Notations and Problem Setting}

 In this paper, we focus on improving the performance of neural topic model (NTM), measured via topic coherence. NTM inherits the architecture of Variational Autoencoder, where the latent vector is taken as topic distribution. Suppose the vocabulary has $V$ unique words, each document is represented as a word count vector $\mathbf{x} \in \mathbb{R}^V$ and a latent distribution over $T$ topics: $\mathbf{z} \in \mathbb{R}^T$. NTM assumes that $z$ is generated from a prior distribution $p(\mathbf{z})$ and x is generated from the conditional distribution over the topic $p_{\phi}(\mathbf{x}|\mathbf{z})$ by a decoder $\phi$. The aim of model is to infer the document-topic distribution given the word count. In other words, it must estimate the posterior distribution $p(\mathbf{z}|\mathbf{x})$, which is approximated by the variational distribution $q_{\theta}(\mathbf{z}|\mathbf{x})$ modelled by an encoder $\theta$. NTM is trained by minimizing the following objective
\begin{equation}
    \mathcal{L}_{\text{VAE}} (\textbf{x}) = -\mathbb{E}_{q_{\theta}(\mathbf{z}|\mathbf{x})} [\log p_{\phi}(\mathbf{x}|\mathbf{z})] + \mathbb{KL} [q_{\theta}(\mathbf{z}|\mathbf{x}) || p(\mathbf{z})]
\end{equation}

\subsection{Contrastive objective derivation}
\label{subsect:contrastive_derivation}

Let $\mathcal{X} = \{\mathbf{x}\}$ denote the set of document bag-of-words. Each vector $\mathbf{x}$ is associated with a negative sample $\mathbf{x}^-$ and a positive sample $\mathbf{x}^+$. We assume a discrete set of latent classes $\mathcal{C}$, so that $(\mathbf{x}, \mathbf{x}^+)$ have the same latent class while $(\mathbf{x}, \mathbf{x}^-)$ does not. In this work, we choose to use the semantic dot product to measure the similarity between prototype $\mathbf{x}$ and the drawn samples.

Our goal is to learn a mapping function $f_{\theta}: \mathbb{R}^V \rightarrow \mathbb{R}^T$ of the encoder $\theta$ which transforms $\mathbf{x}$ to the latent distribution $\mathbf{z}$ ($\mathbf{x}^-$ and $\mathbf{x}^+$ are transformed to $\mathbf{z}^-$ and $\mathbf{z}^+$, respectively). A reasonable mapping function must fulfill two qualities: (1) $\mathbf{x}$ and $\mathbf{x}^+$ are mapped onto nearby positions; (2) $\mathbf{x}$ and $\mathbf{x}^-$ are projected distantly. Regarding goal (1) as the main objective and goal (2) as the constraint enforcing the model to learn the relations among dissimilar samples, we specify the constrained optimization problem, in which $\epsilon$ denotes the strength of the constraint 
\begin{equation}
    \max_{\theta} \mathbb{E}_{\mathbf{x} \sim \mathcal{X}} (\textbf{z} \cdot \textbf{z}^+) \quad \text{subject to} \; \mathbb{E}_{\mathbf{x} \sim \mathcal{X}} (\textbf{z} \cdot \textbf{z}^-) < \epsilon
    \label{eq:op_prob}
\end{equation}
Rewriting Eq. \ref{eq:op_prob} as a Lagragian under KKT conditions \cite{kuhn2014nonlinear, karush1939minima}, we attain:
\begin{equation}
    \mathcal{F}(\theta, \mathbf{x}, \mathbf{x}^+, \mathbf{x}^-) = \mathbb{E}_{\mathbf{x} \sim \mathcal{X}} (\textbf{z} \cdot \textbf{z}^+) - \alpha \cdot [\mathbb{E}_{\mathbf{x} \sim \mathcal{X}} (\textbf{z} \cdot \textbf{z}^-) - \epsilon]
    \label{eq:lag_kkt}
\end{equation}

where the positive KKT multiplier $\alpha$ is the regularisation coefficient that controls the effect of the negative sample on training. Eq. \ref{eq:lag_kkt} can be derived to arrive at the weighted-contrastive loss.
\begin{equation}
    \mathcal{F} (\theta, \mathbf{x}, \mathbf{x}^+, \mathbf{x}^-) \geq \mathcal{L}_{\text{cont}}(\theta, \mathbf{x}, \mathbf{x}^+, \mathbf{x}^-) = \mathbb{E}_{\mathbf{x} \sim \mathcal{X}}\left[\log \frac{ \text{exp}(\textbf{z} \cdot \textbf{z}^+)}{\text{exp}(\textbf{z} \cdot \textbf{z}^+) + \beta \cdot  \text{exp}(\textbf{z} \cdot \textbf{z}^-)}\right]
    \label{eq:inequality}
\end{equation}

where $\alpha = \exp(\beta)$. The full proof of (\ref{eq:inequality}) can be found in the Appendix. Previous works \cite{kim2020adversarial, chaitanya2020contrastive, you2020graph, khosla2020supervised, chuang2020debiased, han2021dual} consider the positive and negative sample equally likely as setting $\beta = 1$. In this paper, we leverage different values of $\beta$ to guide the model concentration on the sample which is distinct from the input. In consequence, a reasonable value of $\beta$ will provide a clear separation among topics in the dataset. We demonstrate our procedure to estimate $\beta$ in the following section.

\begin{algorithm}[t]
\caption{Approximate $\beta$}\label{algo:approx_beta}
\begin{algorithmic}[1]
\Require{Dataset $\mathcal{D} = \{\mathbf{x}_i\}_{i=1}^{N}$, model parameter $\theta$, model $f$, total training steps $T$}
\State Randomly pick a batch of $L$ samples from the training set
\For{each sample $\mathbf{x}_l$ in the chosen batch}
    \State Draw the negative sample $\mathbf{x}^-_l$ and a positive sample $\mathbf{x}^+_l$
    \State Obtain the latent distribution associated with the drawn samples: $\mathbf{z}_l^- = f(\mathbf{x}_l^-)$, $f(\mathbf{x}_l^+) = \mathbf{z}_l^+$
    \State Obtain the candidate $\beta$ value with $\gamma_l = (\mathbf{z} \cdot \mathbf{z}^+) / (\mathbf{z} \cdot \mathbf{z}^-)$.
\EndFor 
\State Initialize $\beta$ as the mean of the candidate list $\beta_0 = \frac{1}{L}\cdot\sum_{l=1}^{L}\gamma_l$

\For{$t = 1$ to $T$}
\State Train the model with $\beta_t = \frac{1}{2} - \frac{1}{T} \left|\frac{T}{2} - t\right| + \beta_0$
\EndFor
\end{algorithmic}
\end{algorithm}

\subsection{Controlling the effect of negative sample}
\label{subsect:controlling}
When choosing value of $\beta$, we need to answer the following questions: (1) What impact does $\beta$ have on the process of training? and (2) Is it possible to design a procedure which is data-oriented to approximate $\beta$?

\textbf{Understanding the impact of $\beta$} \; To exemplify point (1), we study the impact of $\beta$ on the contrastive loss presented in Section \ref{subsect:contrastive_derivation}. The gradient of the contrastive loss (\ref{eq:inequality}) with respect to the latent distribution $\mathbf{z}$ would be:
\begin{equation}
\frac{\delta \mathcal{L}_{\text{cont}}}{\delta \mathbf{z}} = \mathbb{E}_{\mathbf{x} \sim \mathcal{X}} \left[\frac{(\mathbf{z}^+ - \mathbf{z}^-) \cdot \text{exp}(\mathbf{z} \cdot \mathbf{z}^-)}{\text{exp}(\textbf{z} \cdot \textbf{z}^+)/\beta +  \text{exp}(\textbf{z} \cdot \textbf{z}^-)}\right]
\end{equation}
This derivation confirms the proportionality of the gradient norm with respect to $\beta$. As the training progresses, the update step must be carefully controlled to avoid bouncing around the minimum or getting stuck in local optima. 

\textbf{Adaptive scheduling} We leverage the adaptive approach to construct a data-oriented procedure to estimate $\beta$. Initially, the neural topic model will consider the representation of each document equally likely. The relation of the similarity of the positive and the prototype to the one of the negative and the prototype can provide us with a starting viewpoint of the model. Concretely, we store that information in the initialized value of $\beta$ which is estimated with the formula $\beta_0 =  \mathbb{E}_{\mathbf{x} \sim \mathcal{X}}\left[(\mathbf{z} \cdot \mathbf{z}^+) / (\mathbf{z} \cdot \mathbf{z}^-) \right]$.

After intialisation, to accommodate to the model learning, we continue to adopt an adaptive strategy which keeps updating value of $\beta$ according to the triangle scheduling procedure: $\beta_t = \frac{1}{2} - \frac{1}{T} \left|\frac{T}{2} - t\right| + \beta_0$. We summarize the detail of choosing $\beta$ in Algo. \ref{algo:approx_beta}.
\vspace{-2mm}
\subsection{Word-based Sampling Strategy}
\label{subsect:sampling}
Here we provide a technical motivation and details of our sampling method. To choose a sample which has the same underlying topic with the input, it is reasonable to filter out $M$ topics which hold large values in the document-topic distribution, as they are considered to be important by the neural topic model. Subsequently, the procedure will draw salient words in each of the topic that will contribute the weights to the drawn samples. We call this strategy as the topic-based sampling strategy.

However, as shown in \cite{miao2017discovering}, the process of topic choosing is sensitive to the training performance and it is challenging to determine the optimal topic number represented for every single input. Miao et al \cite{miao2017discovering} implemented a stick breaking procedure to specifically predict number of topics for each document. Their strategy demands approximating the likelihood increase for each decision of breaking the stick, in other word adding the number of topic that the document denotes. Since their process takes up a considerable amount of computation, we propose a simpler approach which is word-based to draw both positive and negative samples.

For each document with its associated word count vector $\mathbf{x} \in \mathcal{X}$, we form the \emph{tf-idf} representation $\mathbf{x}^{\text{tfidf}}$. Then, we feed x to the neural topic model to obtain the latent vector $\mathbf{z}$ and the reconstructed document $\mathbf{x}^{\text{recon}}$. Our word-based sampling strategy is illustrated in Fig. 2.

\textbf{Negative sampling} We select $k$ tokens $N = \{n_1, n_2, …, n_k\}$ that have the highest \emph{tf-idf} scores. We hypothesize that these words mainly contribute to the topic of the document. By substituting weights of chosen tokens in the original input $\textbf{x}$ with the weights of the reconstructed representation $\mathbf{x}^{\text{recon}}$: $\mathbf{x}^-_{n_j} = \mathbf{x}^{\text{recon}}_{n_j}, j \in \{1, .., k\}$, we enforce the negative samples $\mathbf{x}^-$ to have the main content deviated from the original input $\mathbf{x}$.

Note that since the model improves its reconstruction ability as training progresses, the weights of salient words from the reconstructed output approach those from the original input (but not equal). The model should take a more careful learning step to adapt to this situation. As the negative sample controlling factor $\beta$ decays its value when converging to the final training step, due to our adaptive scheduling approach aforementioned in section \ref{subsect:controlling}, it is able to adapt to this phenomenon.

\textbf{Positive sampling} Contrary to the negative case, we select $k$ tokens possessing the lowest \emph{tf-idf} scores $P = \{p_1, p_2, …, p_k\}$. We obtain the positive sample which bears a resembling theme to the original input by assigning weights of the chosen tokens in $\textbf{x}^{\text{recon}}$ to their counterpart in $\mathbf{x}^+$ through $\mathbf{x}^+_{p_j} = \mathbf{x}^{\text{recon}}_{p_j}, j \in \{1, .., k\}$. This forms a valid positive sampling procedure since modifying weights of insignificant tokens retains the salient topics in the source document.

\begin{figure}[t]
    \centering
      \caption{Our sampling strategy.}
      \vspace{-2mm}
  \includegraphics[width=0.9\textwidth]{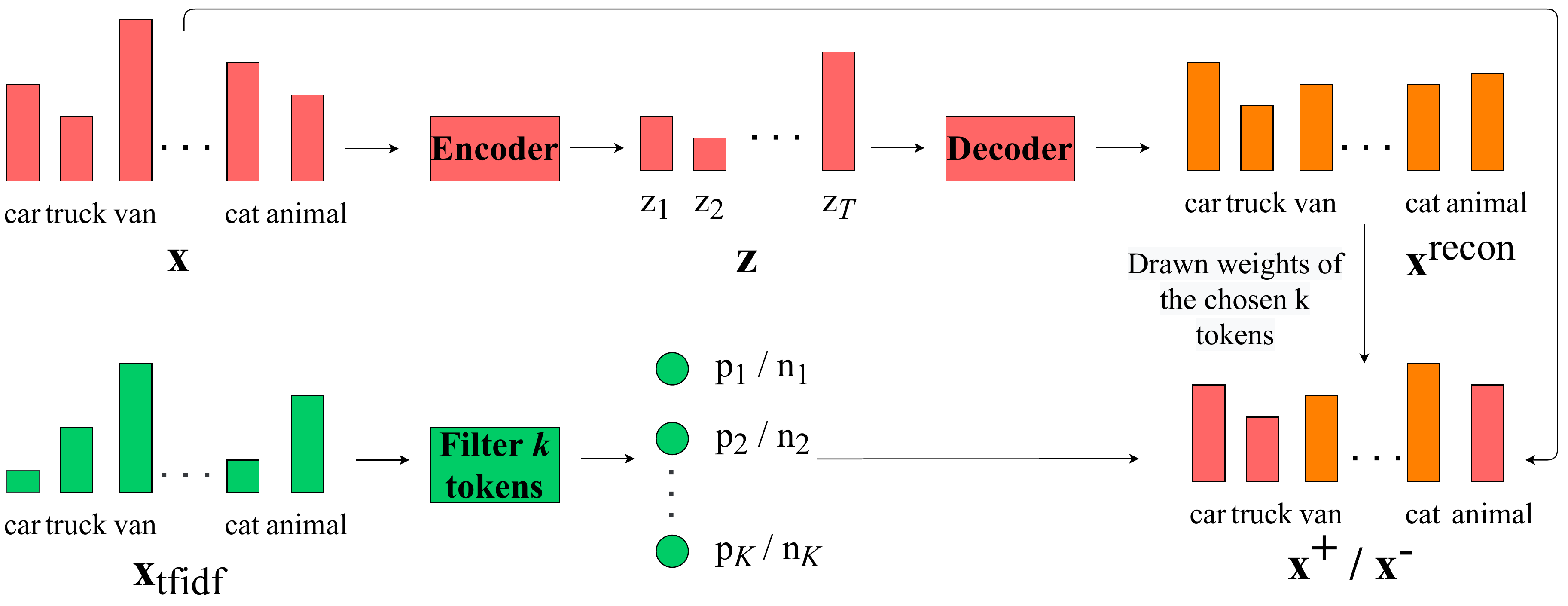} 
  \label{fig:sampling}
  \vspace{-3mm}
\end{figure}

\subsection{Training objective}

\textbf{Joint objective} We jointly combine the goal of reconstructing the original input, matching the approximate with the true posterior distribution, with the contrastive objective specified in section \ref{subsect:contrastive_derivation}.
\vspace{-0.5mm}
\begin{equation}
    \begin{split}
         \mathcal{L}(\mathbf{x}, \theta, \phi) &= -\mathbb{E}_{\mathbf{z} \sim q(\mathbf{z}|\mathbf{x})} \left[\log(p_{\theta}(\mathbf{x}|\mathbf{z})) + \mathbb{KL} (q_{\theta}(\mathbf{z}|\mathbf{x}) || p(\mathbf{z}))\right] \\ 
         &- \mathbb{E}_{\mathbf{z} \sim q(\mathbf{z}|\mathbf{x})} \left[ \log \frac{\text{exp}(\textbf{z} \cdot \textbf{z}^+)}{\text{exp}(\textbf{z} \cdot \textbf{z}^+) + \beta \cdot \text{exp}(\textbf{z} \cdot \textbf{z}^-)}  \right]
    \end{split}
    \label{equation:loss_function}
\end{equation}
We summarize our learning procedure in Algorithm \ref{algo:algorithm}.

\begin{algorithm}[t]
\caption{Contrastive Neural Topic Model}\label{algo:algorithm}
\begin{algorithmic}[1]
\Require{Dataset $\mathcal{D} = \{\mathbf{x}_{\text{tfidf}}^i,\mathbf{x}_{\text{BOW}}^i\}_{i=1}^{N}$, model parameter $\theta$, model $f$, push-pull balancing factor $\alpha$, contrastive controlling weight $\gamma$}
\Repeat
    \For{$i = 1$ to $N$}
        \State Compute $\textbf{z}^i$, $\mathbf{x}^{i}_{\text{recon}}$ from $\mathbf{x}^{i}_{\text{BOW}}$;
        \State Obtain top-k indices of words with smallest \emph{tf-idf} weights $K_{\text{pos}} = \{p_1,p_2,...,p_k\}$;
        \State Sample $\mathbf{x}_{\text{pos}}^i$ from $K^i_{\text{pos}}$ and $\mathbf{x}^{i}_{\text{recon}}$;
        \State Obtain top-k indices of words with largest \emph{tf-idf} weights $K_{\text{neg}} = \{n_1,n_2,...,n_k\}$;
        \State Sample $\mathbf{x}_{\text{neg}}^i$ from $K^i_{\text{neg}}$ and $\mathbf{x}^{i}_{\text{recon}}$;
    \EndFor
    \State Compute the loss function $\mathcal{L}$ defined in Eq. \ref{equation:loss_function};
    \State Update $\theta$ by gradients to minimize the loss;
\Until{the training converges}
\end{algorithmic}
\end{algorithm}
\section{Experimental Setting}
In this section, we provide the experimental setups of our conducted experiments to evaluate the performance of our proposed method. We provide the statistics summary of the datasets in Appendix.
\subsection{Datasets} We conduct our experiments on three readily available datasets that belong to various domains, vocabulary sizes, and document lengths:

\begin{itemize}[leftmargin=*]
    \item \textbf{20Newsgroups (20NG)} dataset \cite{lang1995newsweeder} consists of about 18000 documents, each document is a newsgroup post and associated with a newsgroup label (for example, \texttt{talk.politics.misc}). Following Huynh et al. \cite{huynh2020otlda}, we preprocess the dataset to remove stopwords, words possessing length equal to $1$, and get rid of words whose frequency is less than $100$. We conduct the dataset split with $48\%$, $12\%$, $40\%$ for training, validation, and testing, respectively.
    
    \item \textbf{Wikitext-103 (Wiki)} \cite{merity2016pointer} is a version of WikiText dataset, which includes about $28500$ articles from Good and Featured section on Wikipedia. We follow the preprocess, keep the top $20000$ words as in \cite{merity2016pointer}, and use the train/dev/test split of $70\%$, $15\%$, and $15\%$.
    
    \item \textbf{IMDb movie reviews (IMDb)} \cite{maas2011learning} has $50000$ movie reviews for analytics. Each review in the corpus is connected with a sentiment label, which we use as the external variable for our topic model. Respectively, we apply the dataset split of $50\%$, $25\%$, $25\%$ for training, validation, and testing.
\end{itemize}

For evaluation measure, we use the Normalized Mutual Pointwise Information (NPMI) since this strongly correlates with human judgement and is popularly applied to verify the topic quality \cite{hoyle2020improving}. For text classification, we use the F1-score as the evaluation metric.

\subsection{Baselines}

We compare our method with the following state-of-the-art neural topic models of diverse styles: 

\begin{itemize}[leftmargin=*]
    \item \textbf{NTM} \cite{ding2018coherence} a Gaussian-based neural topic model proposed by (Miao et al., 2017) inheriting the VAE architecture and utilizing neural variational inference for training.
    \item \textbf{SCHOLAR} \cite{card2017neural} a VAE-based neural topic model learning with logistic normal prior and is provided with a method to incorporate external variables. 
    \item \textbf{SCHOLAR + BAT} \cite{hoyle2020improving} a version of SCHOLAR model trained using knowledge distillation where BERT model as a teacher provides contextual knowledge for its student, the neural topic model.
    \item \textbf{W-LDA} \cite{nan2019topic} a topic model which takes form of a Wasserstein auto-encoder with Dirichlet prior approximated by minimizing Maximum Mean Discrepancy.
    \item \textbf{BATM} \cite{wang2020neural} a neural topic model whose architecture is inspired by Generative Adversarial Network. We use the version trained with bidirectional adversarial training method and the architecture consisting of 3 components: encoder, generator, and discriminator.
\end{itemize}

\section{Results}
\subsection{Topic coherence}
{\renewcommand{\arraystretch}{1.2}
\begin{table}[t]
\vspace{-4mm}
\centering
\Huge
\caption{Results measured in NPMI of neural topic models}
\vspace{2mm}
\resizebox{\textwidth}{!}{
\begin{tabular}{|c|cc|cc|cc|} \hline
        & \multicolumn{2}{c|}{\textbf{20NG}}  &  \multicolumn{2}{c|}{\textbf{IMDb}} & \multicolumn{2}{c|}{\textbf{Wiki}} \\ 
        & $T = 50$ & $T = 200$ & $T = 50$ & $T = 200$ & $T = 50$ & $T = 200$  \\ \hline  
\textbf{NTM} \cite{ding2018coherence} & 0.283 $\pm$ 0.004  & 0.277 $\pm$ 0.003  & 0.170 $\pm$ 0.008  & 0.169 $\pm$ 0.003 & 0.250 $\pm$ 0.010 & 0.291 $\pm$ 0.009 \\
\textbf{W-LDA} \cite{nan2019topic} & 0.279 $\pm$ 0.003  & 0.188 $\pm$ 0.001 & 0.136 $\pm$ 0.007  & 0.095 $\pm$ 0.003 & 0.451 $\pm$ 0.012 & 0.308 $\pm$ 0.007 \\
\textbf{BATM} \cite{wang2020neural} & 0.314 $\pm$ 0.003  & 0.245 $\pm$ 0.001 & 0.065 $\pm$ 0.008  & 0.090 $\pm$ 0.004 & 0.336 $\pm$ 0.010 & 0.319 $\pm$ 0.005 \\
\textbf{SCHOLAR} \cite{card2017neural} & 0.319 $\pm$ 0.007 & 0.263 $\pm$ 0.002 & 0.168 $\pm$ 0.002  & 0.140 $\pm$ 0.001 & 0.429 $\pm$ 0.011 & 0.446 $\pm$ 0.009 \\
\textbf{SCHOLAR + BAT} \cite{hoyle2020improving} & 0.324 $\pm$ 0.006 & 0.272 $\pm$ 0.002  & 0.182 $\pm$ 0.002 & 0.175 $\pm$ 0.003 & 0.446 $\pm$ 0.010 & 0.455 $\pm$ 0.007 \\ \hline
\textbf{Our model - $k = 1$} & 0.327 $\pm$ 0.006 & 0.274 $\pm$ 0.003 & 0.191 $\pm$ 0.007 & 0.185 $\pm$ 0.003 & 0.455 $\pm$ 0.012 & 0.450 $\pm$ 0.008 \\
\textbf{Our model - $k = 5$} & 0.328 $\pm$ 0.004 & 0.277 $\pm$ 0.003 & 0.195 $\pm$ 0.008 & 0.187 $\pm$ 0.001 & 0.465 $\pm$ 0.012 & 0.456 $\pm$ 0.004 \\
\textbf{Our model - $k = 15$} & \textbf{0.334} $\pm$ \textbf{0.004} & \textbf{0.280} $\pm$ \textbf{0.003} & \textbf{0.197} $\pm$ \textbf{0.006} & \textbf{0.188} $\pm$ \textbf{0.002} & \textbf{0.497} $\pm$ \textbf{0.009} & \textbf{0.478} $\pm$ \textbf{0.006} \\
\hline 
\end{tabular}
}
\label{tab:exp_tc}
\vspace{-3mm}
\end{table}}

\textbf{Overall basis} We evaluate our methods both at $K = 50$ and $K = 200$. For each topic, we follow previous works \cite{hoyle2020improving, wang2019atm, card2017neural} to pick the top $10$ words, measure its NPMI measure and calculate in the average value. As shown in Tab. \ref{tab:exp_tc}, our method achieves the best topic coherence on three benchmark datasets. We surpass the baseline SCHOLAR \cite{card2017neural}, its version trained with distilled knowledge SCHOLAR + BAT \cite{hoyle2020improving}, and other state-of-the-art neural topic models in both cases of $K = 50$ and $K = 200$. We also establish the robustness of our improvement by conducting experiments on $5$ runs with different random seeds and recording the mean and standard deviation. This confirms that the contrastive framework promotes the overall quality of generated topics. 

\begin{figure}[h]
\centering
\vspace{-2mm}
\caption{(left) Jensen-Shannon for aligned topic pairs of SCHOLAR and our model. (right) The number of aligned topic pairs which our model improves upon SCHOLAR model}
\vspace{-3mm}
\begin{subfigure}[t]{.50\textwidth}
    \centering
  \includegraphics[width=\linewidth]{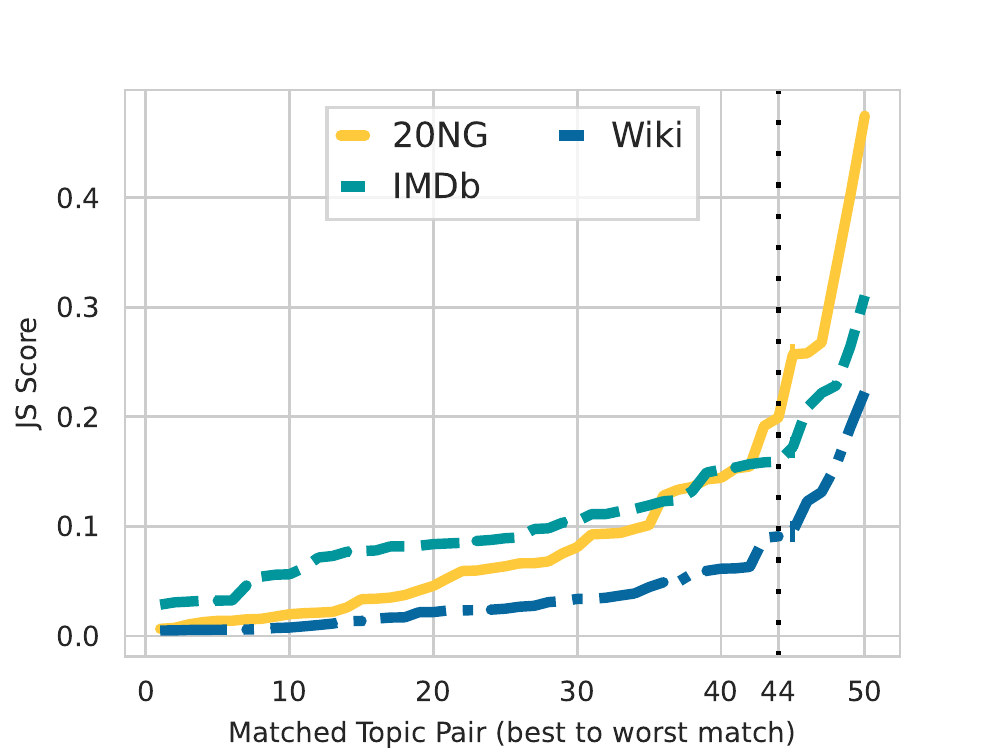} 
\end{subfigure}
\begin{subfigure}[t]{.49\textwidth}
    \centering
  \includegraphics[width=\linewidth]{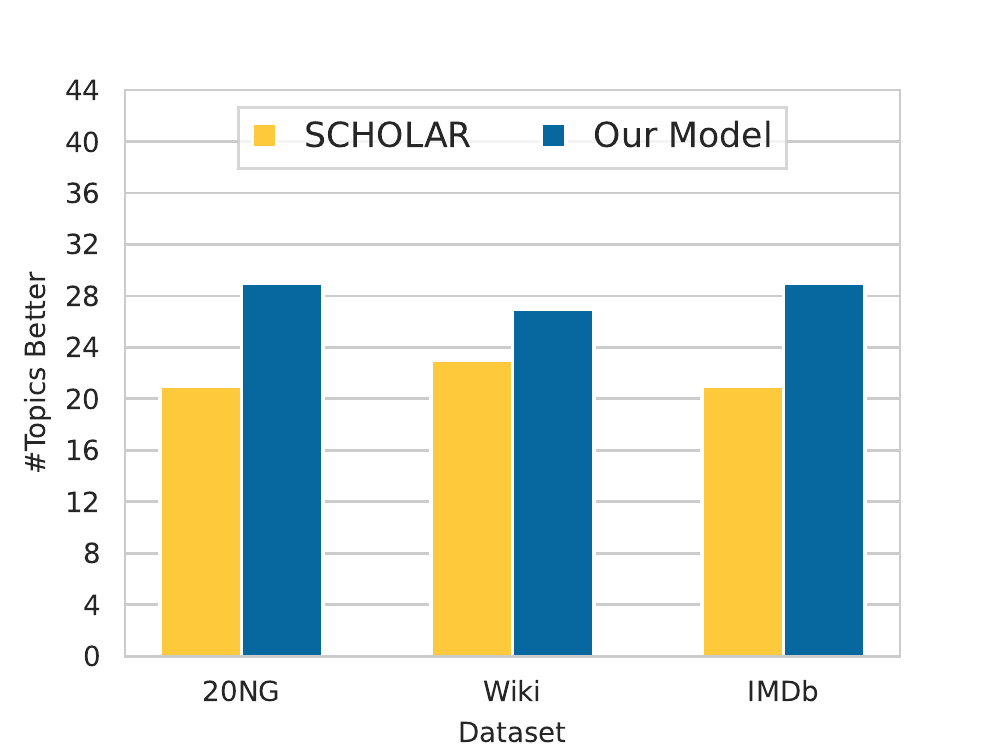} 
\end{subfigure} %
\label{fig:npmi_plot}
\vspace{-5mm}
\end{figure}

\textbf{Topic-by-topic basis} To further evaluate the performance of our method, we proceed to individually compare each of our topics with the aligned topic produced by the baseline neural topic model. Following Hoyle et al. \cite{hoyle2020improving}, we use a variant of competitive linking to greedily approximate the optimal weight of the bipartite graph matching. Particularly, a bipartite graph is constructed by linking the topics of our model and the baseline one. The weight of each link is represented as the Jensen-Shannon (JS) divergence \cite{wong1985entropy, lin1991divergence} between two topics. We iteratively choose the pair according to its lowest JS score, dispense those two topics from the topic list, and repeat until the JS score surpasses a certain threshold. Fig. \ref{fig:npmi_plot} (left) shows the aligned scores for three benchmark corpora. Using visual inspection, we decide to choose the most aligned 44 topic pairs to conduct the comparison. As shown in Fig. \ref{fig:npmi_plot} (right), our model has more topics with higher NPMI score than the baseline model. This means that our model not only generates better topics on average but also on the topic-by-topic basis.

\subsection{Text classification}
\begin{wraptable}{r}{6.5cm}
\caption{Text classification employing the latent distribution predicted by neural topic models.}
{\renewcommand{\arraystretch}{1.05}
\begin{tabular}{|c|cc|} \hline
\textbf{Model} & \textbf{20NG} & \textbf{IMDb} \\ \hline  
\textbf{BATM} \cite{wang2020neural} & 30.8 & 66.0 \\ 
\textbf{SCHOLAR} \cite{card2017neural} & 52.9  & 83.4 \\
\textbf{SCHOLAR + BAT} \cite{hoyle2020improving} & 32.2 & 73.1 \\ \hline 
\textbf{Our model} & \textbf{54.4} & \textbf{84.2} \\
\hline 
\end{tabular}}
\label{tab:exp_classification}
\vspace{-6mm}
\end{wraptable}
In order to compare the extrinsic predictive performance, we use document classification as the downstream task. We collect the latent vectors inferred by neural topic models in $K = 50$ and train a Random Forest with the number of decision trees as $10$ and the maximum depth as $8$ to predict the class of each document. We pick \emph{IMDb} and \emph{20NG} for our experiment. Our method surpasses other neural topic models on the downstream text classification with significant gaps, as shown in Tab. \ref{tab:exp_classification}.

\subsection{Ablation Study}
{\renewcommand{\arraystretch}{1.1}
\begin{table}[t]
\centering
\Huge
\caption{Ablation studies
}
\vspace{2mm}
\resizebox{1.0\linewidth}{!}{
\begin{tabular}{|p{9cm}|cc|cc|cc|} \hline
        & \multicolumn{2}{c|}{\textbf{20NG}}  &  \multicolumn{2}{c|}{\textbf{IMDb}} & \multicolumn{2}{c|}{\textbf{Wiki}} \\ 
        & $T = 50$ & $T = 200$ & $T = 50$ & $T = 200$ & $T = 50$ & $T = 200$  \\ \hline  
\textbf{Our method} & \textbf{0.334 $\pm$ 0.004} & \textbf{0.280 $\pm$ 0.003}  & \textbf{0.197 $\pm$ 0.006} & \textbf{0.190 $\pm$ 0.002} & \textbf{0.497 $\pm$ 0.009} & \textbf{0.478 $\pm$ 0.006} \\
\textbf{- w/o positive sampling} & 0.320 $\pm$ 0.004 & 0.272 $\pm$ 0.002 & 0.187 $\pm$ 0.006 & 0.182 $\pm$ 0.007 & 0.452 $\pm$ 0.012 & 0.448 $\pm$ 0.009 \\
\textbf{- w/o negative sampling} & 0.331 $\pm$ 0.002 & 0.277 $\pm$ 0.002 & 0.195 $\pm$ 0.008 & 0.188 $\pm$ 0.003 & 0.474 $\pm$ 0.010 & 0.468 $\pm$ 0.007 \\
\hline 
\end{tabular}
}
\vspace{-7mm}
\label{tab:exp_ablation}
\end{table}}

To verify the efficiency mimicking the human behavior in learning topic by grasping the commonalities, we train our methods under the besting setting with ($k=15$, with word-based sampling), but with two different objectives: (1) Without positive sampling: model captures semantic pattern by only distinguishing the input and the negative sample; (2) Without negative sampling: model learns the semantic pattern by solely minimizing the similarity the input with the positive sample. Tab. \ref{tab:exp_ablation} demonstrates losing one of the two views in contrastive framework degrades the quality of the topics. We include the optimizing objective for the two approaches in the Appendix. Remarkably, it is interesting that removing the negative objective influences less than for the positive one. This reconfirms the soundness of our approach to focus on the effect of positive sample, which takes inspiration from human perspective.

\section{Analysis}
\subsection{Effect of adaptive controlling parameter}
{\renewcommand{\arraystretch}{1.2}
\begin{table}[t]
\centering
\Huge
\caption{Results of different sampling method}
\vspace{1mm}
\resizebox{1.0\linewidth}{!}{

\begin{tabular}{|c|cc|cc|cc|} \hline
        & \multicolumn{2}{c|}{\textbf{20NG}}  &  \multicolumn{2}{c|}{\textbf{IMDb}} & \multicolumn{2}{c|}{\textbf{Wiki}} \\ 
        & $T = 50$ & $T = 200$ & $T = 50$ & $T = 200$ & $T = 50$ & $T = 200$  \\ \hline  
\textbf{0-sampling} & 0.269 $\pm$ 0.003 & 0.231 $\pm$ 0.001  & 0.171 $\pm$ 0.005 & 0.172 $\pm$ 0.002 & 0.448 $\pm$ 0.008 & 0.429 $\pm$ 0.007 \\
\textbf{Random sampling} & 0.321 $\pm$ 0.005 & 0.273 $\pm$ 0.001 & 0.183 $\pm$ 0.002 & 0.177 $\pm$ 0.001 & 0.460 $\pm$ 0.012 & 0.462 $\pm$ 0.003 \\
\textbf{Topic-based sampling - $T = 1$} & 0.313 $\pm$ 0.004 & 0.270 $\pm$ 0.005 & 0.189 $\pm$ 0.002 & 0.172 $\pm$ 0.002 & 0.467 $\pm$ 0.012 & 0.464 $\pm$ 0.002 \\
\textbf{Topic-based sampling - $T = 3$} & 0.322 $\pm$ 0.005 & 0.268 $\pm$ 0.002 & 0.181 $\pm$ 0.006 & 0.170 $\pm$ 0.007 & 0.450 $\pm$ 0.013 & 0.461 $\pm$ 0.008 \\
\textbf{Topic-based sampling - $T = 5$} & 0.319 $\pm$ 0.001 & 0.273 $\pm$ 0.002 & 0.176 $\pm$ 0.007 & 0.170 $\pm$ 0.003 & 0.472 $\pm$ 0.007 & 0.444 $\pm$ 0.006  \\
\hline 
\textbf{Our method} & \textbf{0.334} $\pm$ \textbf{0.004} & \textbf{0.280} $\pm$ \textbf{0.003} & \textbf{0.197} $\pm$ \textbf{0.006} & \textbf{0.188} $\pm$ \textbf{0.002} & \textbf{0.497} $\pm$ \textbf{0.009}  & \textbf{0.478} $\pm$ \textbf{0.006} \\
\hline 
\end{tabular}
}
\label{tab:exp_sampling}
\end{table}}


\begin{figure}[h]
\centering
\caption{The influence of adaptive controlling parameter $\beta$ on topic coherence measure}
    \centering
  \includegraphics[width=0.5\linewidth]{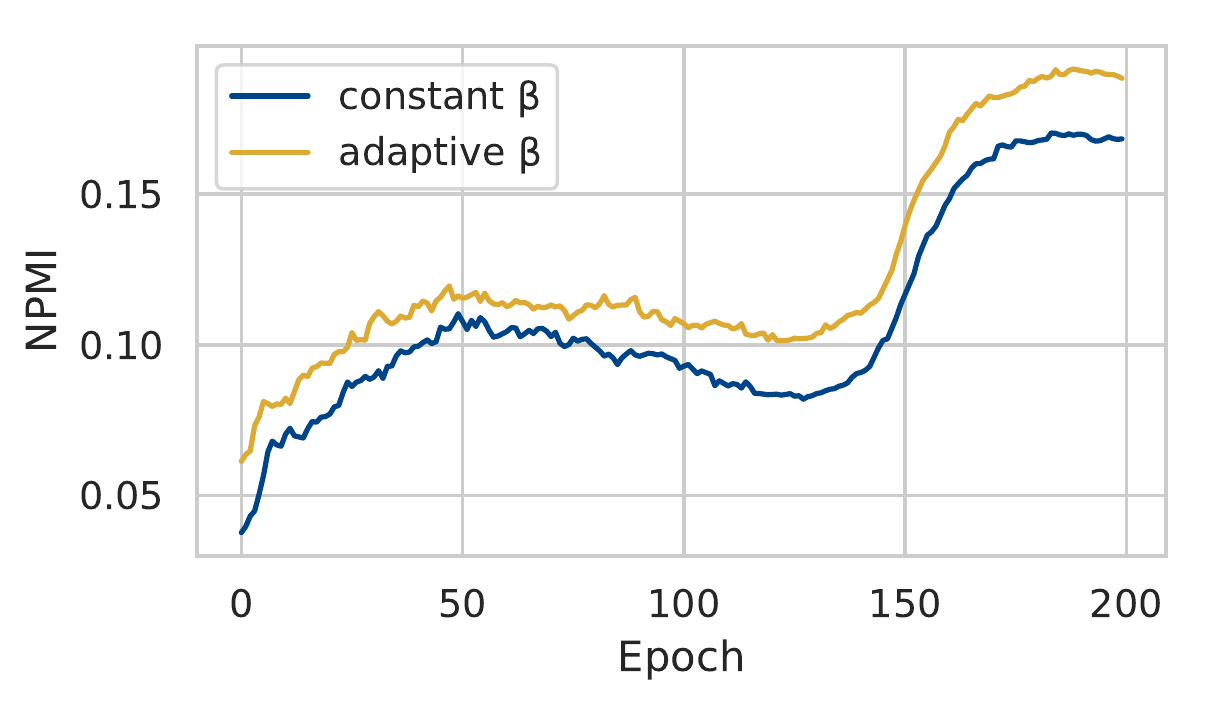} 
\label{fig:analysis}
\vspace{-4mm}
\end{figure}

We then show the relation between $\beta$, which controls the impact of our constraint, and the topic coherence measure in Fig. \ref{fig:analysis}. As shown in the figure, adaptive weight exhibits consistent superiority over manually tuned constant parameter. We elaborate our high performance on the triangle scheduling that brings the self-adjustment in different training stages.
\vspace{-2mm}
\subsection{Random Sampling Strategy}
{\renewcommand{\arraystretch}{1.1}
\begin{table}[t]
\centering
\caption{Significance Testing results, reporting p-value}
\vspace{1mm}
\begin{tabular}{|c|c|c|c|} \hline
\textbf{Number of Topics} & \textbf{20NG} & \textbf{IMDb} & \textbf{Wiki} \\ \hline
$T = 50$ & 0.0140 & 0.0291 & 0.0344 \\ \hline
$T = 200$ & 0.0494 & 0.0012 & 0.0156 \\ \hline
\end{tabular}
\label{tab:stat_test}
\end{table}}

In this section, we demonstrate the effectiveness of our random sampling strategy. We compare our performance with two other methods: (1) $0$-sampling: we replace the weights of $k$ chosen tokens in the BoW with $0$;  (2): we create the negative samples by drawing other documents from the dataset, then extracting the topic vector of each document; we do not perform positive sampling in this variant. (3) Topic-based sampling: the sampling strategy we discussed in section \ref{subsect:sampling}, we experiment with varying choices of $T$. As shown in Tab. \ref{tab:exp_sampling}, our sampling method consistently outperforms other strategies by a large margin. This confirms our hypothesis that topic-based sampling is vulnerable to drawing insufficient or redundant topics and might harm the performance.

In addition, to further evalute the statistical significance of our outperforming over traditional random sampling method, we conduct significance testing and report p-value in Tab. \ref{tab:stat_test}. As it can be seen, all of the p-values are smaller than 0.05, which proves the statistical significance in the improvement of our method against traditional contrastive learning.
\vspace{-2mm}
\subsection{Importance Measure}
{\renewcommand{\arraystretch}{1.1}
\begin{table}[t]
\centering
\caption{Results when employing various importance measures}
\vspace{1mm}
\begin{tabular}{|c|c|c|c|} \hline
\textbf{Metrics} & \textbf{IMDb} & \textbf{20NG} & \textbf{Wiki} \\ \hline
PCA & 0.184 $\pm$ 0.004 & 0.325 $\pm$ 0.003 & 0.481 $\pm$ 0.005 \\ \hline
SVD & 0.181 $\pm$ 0.004 & 0.313 $\pm$ 0.003 & 0.476 $\pm$ 0.014 \\ \hline
\emph{tf} & 0.196 $\pm$ 0.003 & 0.332 $\pm$ 0.006 & 0.495 $\pm$ 0.008 \\ \hline
\emph{idf} & 0.193 $\pm$ 0.001 & 0.334 $\pm$ 0.004 & 0.490 $\pm$ 0.009 \\ \hline
\emph{tf-idf} & \textbf{0.197} $\pm$ \textbf{0.006} & \textbf{0.334} $\pm$ \textbf{0.004} & \textbf{0.497} $\pm$ \textbf{0.009} \\ \hline
\end{tabular}
\label{tab:imp_measure}
\end{table}}

Our word-based sampling strategy employs \emph{tf-idf} measure to determine important and unimportant words that have values to be superseded to form positive and negative samples. 

To have a fair judgement, we also conduct experiments with two other complex sampling methods using Principal Component Analysis (PCA) or Singular Value Decomposition (SVD). Specifically, we decompose the reconstructed and original input vectors into singular values and then replace the largest/smallest singular values of the input with the largest/smallest ones of the reconstructed to obtain negative/positive samples, respectively. For SVD, we choose $k = 15$ largest/smallest values for substitution whereas for PCA, we decompose the input vector onto $50$-d space in order to make it similar to the latent space of neural topic model (number of topics $T = 50$) and proceed to substitute $k = 15$ largest/smallest values as in SVD. We conducted our experiments on 3 datasets IMDb, 20NG, and Wiki with $T = 50$, and reported the results (NPMI) in Tab. \ref{tab:imp_measure}.

As it can be obviously seen, despite its simplicity, \emph{tf-idf}-based sampling method outperforms other complicated sampling methods in our tasks.
{\renewcommand{\arraystretch}{1.2}
\begin{table}[h!]
\centering
\caption{Some example topics on three datasets \emph{20NG}, \emph{Wiki}, and \emph{IMDb}}
\vspace{1mm}
\resizebox{1.0\linewidth}{!}{
\begin{tabular}{|c|c|c|c|} \hline
\textbf{Dataset} & \textbf{Method} & \textbf{NPMI} & \textbf{Topic} \\ \hline
\multirow{2}{*}{20NG} & \textbf{\small{SCHOLAR}} & 0.259 & max bush clinton crypto pgp clipper nsa announcement air escrow \\
 & \textbf{Our model} & 0.543 & crypto clipper encryption nsa escrow wiretap chip proposal warrant secure \\ \hline
 \multirow{2}{*}{Wiki} & \textbf{\small{SCHOLAR}} & 0.196 & airlines boeing vehicle manufactured flight skiing airline ski engine alpine \\
 & \textbf{Our model} & 0.564 & skiing ski alpine athletes para paralympic nordic olympic paralympics ipc \\ \hline
\multirow{2}{*}{IMDb} & \textbf{\small{SCHOLAR}} & 0.145 & hong chinese kong imagery japanese rape lynch torture violence disturbing \\
 & \textbf{Our model} & 0.216 & hong chinese kong japan fairy japanese sword martial fantasy magical \\ \hline
\end{tabular}
}
\label{tab:topic_example}
\end{table}}

\subsection{Case Studies} We randomly extract sample topic in each of three datasets to study the quality of the generated topics and show the result in Tab. \ref{tab:topic_example}. Generally, the topic words generated by our model tends to concentrate on the main topic of the document. For example, in \emph{20NG} dataset, it can be seen that our words tend to concentrate on the topic related to cryptography (\emph{encryption}, \emph{crypto}, etc.) and computer hardware (\emph{chip}, \emph{wiretap}, \emph{clipper}, etc.), rather than political words, e.g. \emph{bush} and \emph{clinton} generated by SCHOLAR model. Our generated topics in Wiki is more focused on \textit{skiing}, while SCHOLAR's topic comprises of traffic terms such as \emph{vehicle}, \emph{boeing}, and \emph{engine}. Similarly, the topic words in \emph{IMDb} generated by our model mainly reflects the theme of Fantasy movie in \emph{japan}, \emph{chinese}, and \emph{hong kong}, while not including off-topic words such as \emph{torture} and \emph{disturbing} which were generated by SCHOLAR model.

\section{Conclusion}

In this paper, we propose a novel method to help neural topic model learn more meaningful representations. Approaching the problem with a mathematical perspective, we enforce our model to consider both effects of positive and negative pairs. To better capture semantic patterns, we introduce a novel sampling strategy which takes inspiration from human behavior in differentiating documents. Experimental results on three common benchmark datasets show that our method outperforms other state-of-the-art neural topic models in terms of topic coherence. 
\newpage

\bibliographystyle{ieeetr}
\bibliography{neurips_2021}

\begin{thebibliography}{10}

\bibitem{lu2011investigating}
Y.~Lu, Q.~Mei, and C.~Zhai, ``Investigating task performance of probabilistic
  topic models: an empirical study of plsa and lda,'' {\em Information
  Retrieval}, vol.~14, no.~2, pp.~178--203, 2011.

\bibitem{subramani2018novel}
S.~Subramani, V.~Sridhar, and K.~Shetty, ``A novel approach of neural topic
  modelling for document clustering,'' in {\em 2018 IEEE Symposium Series on
  Computational Intelligence (SSCI)}, pp.~2169--2173, IEEE, 2018.

\bibitem{tuan2020capturing}
L.~A. Tuan, D.~Shah, and R.~Barzilay, ``Capturing greater context for question
  generation,'' in {\em Proceedings of the AAAI Conference on Artificial
  Intelligence}, vol.~34, pp.~9065--9072, 2020.

\bibitem{wang2019open}
R.~Wang, D.~Zhou, and Y.~He, ``Open event extraction from online text using a
  generative adversarial network,'' {\em arXiv preprint arXiv:1908.09246},
  2019.

\bibitem{wang2021pandemic}
M.~Wang and P.~Mengoni, ``How pandemic spread in news: Text analysis using
  topic model,'' {\em arXiv preprint arXiv:2102.04205}, 2021.

\bibitem{nguyen2021enriching}
T.~Nguyen, A.~T. Luu, T.~Lu, and T.~Quan, ``Enriching and controlling global
  semantics for text summarization,'' {\em arXiv preprint arXiv:2109.10616},
  2021.

\bibitem{blei2003latent}
D.~M. Blei, A.~Y. Ng, and M.~I. Jordan, ``Latent dirichlet allocation,'' {\em
  the Journal of machine Learning research}, vol.~3, pp.~993--1022, 2003.

\bibitem{miao2017discovering}
Y.~Miao, E.~Grefenstette, and P.~Blunsom, ``Discovering discrete latent topics
  with neural variational inference,'' in {\em International Conference on
  Machine Learning}, pp.~2410--2419, PMLR, 2017.

\bibitem{srivastava2017autoencoding}
A.~Srivastava and C.~Sutton, ``Autoencoding variational inference for topic
  models,'' {\em arXiv preprint arXiv:1703.01488}, 2017.

\bibitem{wang2019atm}
R.~Wang, D.~Zhou, and Y.~He, ``Atm: Adversarial-neural topic model,'' {\em
  Information Processing \& Management}, vol.~56, no.~6, p.~102098, 2019.

\bibitem{wang2020neural}
R.~Wang, X.~Hu, D.~Zhou, Y.~He, Y.~Xiong, C.~Ye, and H.~Xu, ``Neural topic
  modeling with bidirectional adversarial training,'' {\em arXiv preprint
  arXiv:2004.12331}, 2020.

\bibitem{hu2020neural}
X.~Hu, R.~Wang, D.~Zhou, and Y.~Xiong, ``Neural topic modeling with
  cycle-consistent adversarial training,'' {\em arXiv preprint
  arXiv:2009.13971}, 2020.

\bibitem{nan2019topic}
F.~Nan, R.~Ding, R.~Nallapati, and B.~Xiang, ``Topic modeling with wasserstein
  autoencoders,'' {\em arXiv preprint arXiv:1907.12374}, 2019.

\bibitem{blum1998combining}
A.~Blum and T.~Mitchell, ``Combining labeled and unlabeled data with
  co-training,'' in {\em Proceedings of the eleventh annual conference on
  Computational learning theory}, pp.~92--100, 1998.

\bibitem{xu2013survey}
C.~Xu, D.~Tao, and C.~Xu, ``A survey on multi-view learning,'' {\em arXiv
  preprint arXiv:1304.5634}, 2013.

\bibitem{bachman2019learning}
P.~Bachman, R.~D. Hjelm, and W.~Buchwalter, ``Learning representations by
  maximizing mutual information across views,'' {\em arXiv preprint
  arXiv:1906.00910}, 2019.

\bibitem{chen2020simple}
T.~Chen, S.~Kornblith, M.~Norouzi, and G.~Hinton, ``A simple framework for
  contrastive learning of visual representations,'' in {\em International
  conference on machine learning}, pp.~1597--1607, PMLR, 2020.

\bibitem{tian2020makes}
Y.~Tian, C.~Sun, B.~Poole, D.~Krishnan, C.~Schmid, and P.~Isola, ``What makes
  for good views for contrastive learning,'' {\em arXiv preprint
  arXiv:2005.10243}, 2020.

\bibitem{card2017neural}
D.~Card, C.~Tan, and N.~A. Smith, ``Neural models for documents with
  metadata,'' {\em arXiv preprint arXiv:1705.09296}, 2017.

\bibitem{chuang2020debiased}
C.-Y. Chuang, J.~Robinson, L.~Yen-Chen, A.~Torralba, and S.~Jegelka, ``Debiased
  contrastive learning,'' {\em arXiv preprint arXiv:2007.00224}, 2020.

\bibitem{robinson2020contrastive}
J.~Robinson, C.-Y. Chuang, S.~Sra, and S.~Jegelka, ``Contrastive learning with
  hard negative samples,'' {\em arXiv preprint arXiv:2010.04592}, 2020.

\bibitem{chen2020big}
T.~Chen, S.~Kornblith, K.~Swersky, M.~Norouzi, and G.~Hinton, ``Big
  self-supervised models are strong semi-supervised learners,'' {\em arXiv
  preprint arXiv:2006.10029}, 2020.

\bibitem{tian2019contrastive}
Y.~Tian, D.~Krishnan, and P.~Isola, ``Contrastive multiview coding,'' {\em
  arXiv preprint arXiv:1906.05849}, 2019.

\bibitem{kingma2013auto}
D.~P. Kingma and M.~Welling, ``Auto-encoding variational bayes,'' {\em arXiv
  preprint arXiv:1312.6114}, 2013.

\bibitem{rezende2014stochastic}
D.~J. Rezende, S.~Mohamed, and D.~Wierstra, ``Stochastic backpropagation and
  approximate inference in deep generative models,'' in {\em International
  conference on machine learning}, pp.~1278--1286, PMLR, 2014.

\bibitem{miao2016neural}
Y.~Miao, L.~Yu, and P.~Blunsom, ``Neural variational inference for text
  processing,'' in {\em International conference on machine learning},
  pp.~1727--1736, PMLR, 2016.

\bibitem{ding2018coherence}
R.~Ding, R.~Nallapati, and B.~Xiang, ``Coherence-aware neural topic modeling,''
  {\em arXiv preprint arXiv:1809.02687}, 2018.

\bibitem{hoyle2020improving}
A.~Hoyle, P.~Goel, and P.~Resnik, ``Improving neural topic models using
  knowledge distillation,'' {\em arXiv preprint arXiv:2010.02377}, 2020.

\bibitem{khosla2020supervised}
P.~Khosla, P.~Teterwak, C.~Wang, A.~Sarna, Y.~Tian, P.~Isola, A.~Maschinot,
  C.~Liu, and D.~Krishnan, ``Supervised contrastive learning,'' {\em arXiv
  preprint arXiv:2004.11362}, 2020.

\bibitem{hjelm2018learning}
R.~D. Hjelm, A.~Fedorov, S.~Lavoie-Marchildon, K.~Grewal, P.~Bachman,
  A.~Trischler, and Y.~Bengio, ``Learning deep representations by mutual
  information estimation and maximization,'' {\em arXiv preprint
  arXiv:1808.06670}, 2018.

\bibitem{xie2021detco}
E.~Xie, J.~Ding, W.~Wang, X.~Zhan, H.~Xu, Z.~Li, and P.~Luo, ``Detco:
  Unsupervised contrastive learning for object detection,'' {\em arXiv preprint
  arXiv:2102.04803}, 2021.

\bibitem{sun2021fsce}
B.~Sun, B.~Li, S.~Cai, Y.~Yuan, and C.~Zhang, ``Fsce: Few-shot object detection
  via contrastive proposal encoding,'' {\em arXiv preprint arXiv:2103.05950},
  2021.

\bibitem{amrani2019learning}
E.~Amrani, R.~Ben-Ari, T.~Hakim, and A.~Bronstein, ``Learning to detect and
  retrieve objects from unlabeled videos,'' in {\em 2019 IEEE/CVF International
  Conference on Computer Vision Workshop (ICCVW)}, pp.~3713--3717, IEEE, 2019.

\bibitem{zhao2020contrastive}
X.~Zhao, R.~Vemulapalli, P.~Mansfield, B.~Gong, B.~Green, L.~Shapira, and
  Y.~Wu, ``Contrastive learning for label-efficient semantic segmentation,''
  {\em arXiv preprint arXiv:2012.06985}, 2020.

\bibitem{chaitanya2020contrastive}
K.~Chaitanya, E.~Erdil, N.~Karani, and E.~Konukoglu, ``Contrastive learning of
  global and local features for medical image segmentation with limited
  annotations,'' {\em arXiv preprint arXiv:2006.10511}, 2020.

\bibitem{ke2021universal}
T.-W. Ke, J.-J. Hwang, and S.~X. Yu, ``Universal weakly supervised segmentation
  by pixel-to-segment contrastive learning,'' {\em arXiv preprint
  arXiv:2105.00957}, 2021.

\bibitem{ho2020contrastive}
C.-H. Ho and N.~Vasconcelos, ``Contrastive learning with adversarial
  examples,'' {\em arXiv preprint arXiv:2010.12050}, 2020.

\bibitem{miyato2018virtual}
T.~Miyato, S.-i. Maeda, M.~Koyama, and S.~Ishii, ``Virtual adversarial
  training: a regularization method for supervised and semi-supervised
  learning,'' {\em IEEE transactions on pattern analysis and machine
  intelligence}, vol.~41, no.~8, pp.~1979--1993, 2018.

\bibitem{kim2020adversarial}
M.~Kim, J.~Tack, and S.~J. Hwang, ``Adversarial self-supervised contrastive
  learning,'' {\em arXiv preprint arXiv:2006.07589}, 2020.

\bibitem{you2020graph}
Y.~You, T.~Chen, Y.~Sui, T.~Chen, Z.~Wang, and Y.~Shen, ``Graph contrastive
  learning with augmentations,'' {\em Advances in Neural Information Processing
  Systems}, vol.~33, 2020.

\bibitem{sun2019infograph}
F.-Y. Sun, J.~Hoffmann, V.~Verma, and J.~Tang, ``Infograph: Unsupervised and
  semi-supervised graph-level representation learning via mutual information
  maximization,'' {\em arXiv preprint arXiv:1908.01000}, 2019.

\bibitem{li2019graph}
Y.~Li, C.~Gu, T.~Dullien, O.~Vinyals, and P.~Kohli, ``Graph matching networks
  for learning the similarity of graph structured objects,'' in {\em
  International Conference on Machine Learning}, pp.~3835--3845, PMLR, 2019.

\bibitem{hassani2020contrastive}
K.~Hassani and A.~H. Khasahmadi, ``Contrastive multi-view representation
  learning on graphs,'' in {\em International Conference on Machine Learning},
  pp.~4116--4126, PMLR, 2020.

\bibitem{logeswaran2018efficient}
L.~Logeswaran and H.~Lee, ``An efficient framework for learning sentence
  representations,'' {\em arXiv preprint arXiv:1803.02893}, 2018.

\bibitem{oord2018representation}
A.~v.~d. Oord, Y.~Li, and O.~Vinyals, ``Representation learning with
  contrastive predictive coding,'' {\em arXiv preprint arXiv:1807.03748}, 2018.

\bibitem{henaff2020data}
O.~Henaff, ``Data-efficient image recognition with contrastive predictive
  coding,'' in {\em International Conference on Machine Learning},
  pp.~4182--4192, PMLR, 2020.

\bibitem{jin2018unsupervised}
S.~Jin, A.~RoyChowdhury, H.~Jiang, A.~Singh, A.~Prasad, D.~Chakraborty, and
  E.~Learned-Miller, ``Unsupervised hard example mining from videos for
  improved object detection,'' in {\em Proceedings of the European Conference
  on Computer Vision (ECCV)}, pp.~307--324, 2018.

\bibitem{kuhn2014nonlinear}
H.~W. Kuhn and A.~W. Tucker, ``Nonlinear programming,'' in {\em Traces and
  emergence of nonlinear programming}, pp.~247--258, Springer, 2014.

\bibitem{karush1939minima}
W.~Karush, ``Minima of functions of several variables with inequalities as side
  constraints,'' {\em M. Sc. Dissertation. Dept. of Mathematics, Univ. of
  Chicago}, 1939.

\bibitem{han2021dual}
J.~Han, M.~Shoeiby, L.~Petersson, and M.~A. Armin, ``Dual contrastive learning
  for unsupervised image-to-image translation,'' {\em arXiv preprint
  arXiv:2104.07689}, 2021.

\bibitem{lang1995newsweeder}
K.~Lang, ``Newsweeder: Learning to filter netnews,'' in {\em Machine Learning
  Proceedings 1995}, pp.~331--339, Elsevier, 1995.

\bibitem{huynh2020otlda}
V.~Huynh, H.~Zhao, and D.~Phung, ``Otlda: A geometry-aware optimal transport
  approach for topic modeling,'' {\em Advances in Neural Information Processing
  Systems}, vol.~33, 2020.

\bibitem{merity2016pointer}
S.~Merity, C.~Xiong, J.~Bradbury, and R.~Socher, ``Pointer sentinel mixture
  models,'' {\em arXiv preprint arXiv:1609.07843}, 2016.

\bibitem{maas2011learning}
A.~Maas, R.~E. Daly, P.~T. Pham, D.~Huang, A.~Y. Ng, and C.~Potts, ``Learning
  word vectors for sentiment analysis,'' in {\em Proceedings of the 49th annual
  meeting of the association for computational linguistics: Human language
  technologies}, pp.~142--150, 2011.

\bibitem{wong1985entropy}
A.~K. Wong and M.~You, ``Entropy and distance of random graphs with application
  to structural pattern recognition,'' {\em IEEE Transactions on Pattern
  Analysis and Machine Intelligence}, no.~5, pp.~599--609, 1985.

\bibitem{lin1991divergence}
J.~Lin, ``Divergence measures based on the shannon entropy,'' {\em IEEE
  Transactions on Information theory}, vol.~37, no.~1, pp.~145--151, 1991.

\end{thebibliography}

\newpage
\appendix
\section{Implementation details}
In this section, we include the hyperprameter details we use in this work, e.g. learning rate, batch size, etc. We apply different sets of hyperparameters, with respect to the dataset the neural topic model is trained on.

{\renewcommand{\arraystretch}{1.1}
\begin{table}[h]
\centering
\Huge
\caption{Hyperparameter details
}
\vspace{2mm}
\resizebox{0.9\linewidth}{!}{
\begin{tabular}{|c|c|c|c|} \hline
        & \textbf{20NG}  & \textbf{IMDb} & \textbf{Wiki} \\ \hline  
\textbf{Learning rate} & 0.002 & 0.001  & 0.002 \\
\textbf{Batch size} & 200 & 200 & 500 \\
$k$ & $\{1, 5, 10, 15, 25, 30\}$ & $\{1, 5, 10, 15, 25, 30\}$ & $\{1, 5, 10, 15, 25, 30\}$ \\
\hline 
\end{tabular}
}
\vspace{-3mm}
\label{tab:hyperparameters}
\end{table}}

\section{Contrastive loss derivation}

We provide the proof of the inequality (\ref{eq:inequality}) in this section.

\begin{theorem} 
Let $\mathbf{x}$ denote the word count representation of a document, $\mathbf{x}^+, \mathbf{x}^-$ denote the positive sample and negative sample with respect to $\mathbf{x}$, $f_{\theta}: \mathbb{R}^V \rightarrow \mathbb{R}^T$ denote the mapping function of the encoder, $\alpha \geq 0$ denote the positive KKT multiplier, and $\epsilon \geq 0$ denote the strength of constraint. Suppose $\beta = \text{exp}(\alpha)$, then we have the following inequality  
\begin{equation}
    \mathbb{E}_{\mathbf{x} \sim \mathcal{X}} \left[\log(\exp(\textbf{z} \cdot \textbf{z}^+)) - \alpha \cdot (\log(\exp(\textbf{z} \cdot \textbf{z}^-))-\epsilon)\right] \geq \mathbb{E}_{\mathbf{x} \sim \mathcal{X}}\left[\log \frac{ \exp(\textbf{z} \cdot \textbf{z}^+)}{\exp(\textbf{z} \cdot \textbf{z}^+) + \beta \cdot  \exp(\textbf{z} \cdot \textbf{z}^-)}\right]
    \label{app:inequality}
\end{equation}
\end{theorem}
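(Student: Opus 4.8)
The plan is to strip away the cosmetic $\log\exp$ wrappers, reduce the claim to a single bound on a log-sum-exp term, and then close the gap using KKT feasibility rather than any pointwise estimate. First I would set $a := \mathbf{z}\cdot\mathbf{z}^+$ and $b := \mathbf{z}\cdot\mathbf{z}^-$, viewed as functions of $\mathbf{x}$ through $\mathbf{z}=f_\theta(\mathbf{x})$ (and likewise for $\mathbf{z}^\pm$). Since $\log(\exp t)=t$ and $\beta=\exp(\alpha)$, the left-hand side of (\ref{app:inequality}) is exactly $\mathbb{E}_{\mathbf{x}\sim\mathcal{X}}[\,a-\alpha b+\alpha\epsilon\,]$, while the right-hand side is $\mathbb{E}_{\mathbf{x}\sim\mathcal{X}}[\,a-\log(\exp a+\beta\exp b)\,]$. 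Cancelling the common term $\mathbb{E}_{\mathbf{x}\sim\mathcal{X}}[a]$, the theorem is equivalent to the single inequality $\mathbb{E}_{\mathbf{x}\sim\mathcal{X}}[\log(\exp a+\beta\exp b)] \ge \alpha\big(\mathbb{E}_{\mathbf{x}\sim\mathcal{X}}[b]-\epsilon\big)$.

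Next I would lower-bound the log-sum-exp term. Since $\exp a>0$ and $\beta=\exp(\alpha)>0$, pointwise $\exp a+\beta\exp b\ge\beta\exp b$, hence $\log(\exp a+\beta\exp b)\ge\log\beta+b=\alpha+b$; taking expectations gives $\mathbb{E}_{\mathbf{x}\sim\mathcal{X}}[\log(\exp a+\beta\exp b)]\ge\alpha+\mathbb{E}_{\mathbf{x}\sim\mathcal{X}}[b]$. It therefore suffices to verify $\alpha+\mathbb{E}_{\mathbf{x}\sim\mathcal{X}}[b]\ge\alpha\big(\mathbb{E}_{\mathbf{x}\sim\mathcal{X}}[b]-\epsilon\big)$, i.e. $\alpha(1+\epsilon)+(1-\alpha)\,\mathbb{E}_{\mathbf{x}\sim\mathcal{X}}[b]\ge0$.

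The final step uses the structure of the problem. The latent vectors are topic distributions, so $b=\mathbf{z}\cdot\mathbf{z}^-\ge0$ and hence $\mathbb{E}_{\mathbf{x}\sim\mathcal{X}}[b]\ge0$. If $\alpha\le1$, both summands are non-negative and we are done; if $\alpha>1$, I would use primal feasibility $\mathbb{E}_{\mathbf{x}\sim\mathcal{X}}[b]<\epsilon$ (the constraint of (\ref{eq:op_prob})) in the now-negative coefficient, which flips to give $\alpha(1+\epsilon)+(1-\alpha)\,\mathbb{E}_{\mathbf{x}\sim\mathcal{X}}[b]>\alpha(1+\epsilon)+(1-\alpha)\epsilon=\alpha+\epsilon\ge0$. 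An even shorter variant instead bounds $\log(\exp a+\beta\exp b)\ge a$ and uses $\mathbb{E}_{\mathbf{x}\sim\mathcal{X}}[a]=\mathbb{E}_{\mathbf{x}\sim\mathcal{X}}[\mathbf{z}\cdot\mathbf{z}^+]\ge0\ge\alpha\big(\mathbb{E}_{\mathbf{x}\sim\mathcal{X}}[b]-\epsilon\big)$ directly, since $\alpha\ge0$ and the constraint forces the bracket to be non-positive.

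The one subtlety — and where I expect a careless argument to slip — is that (\ref{app:inequality}) is genuinely an averaged statement, not a pointwise one: the $-\epsilon$ offset and the multiplier $\alpha$ only balance after taking $\mathbb{E}_{\mathbf{x}\sim\mathcal{X}}$ and jointly invoking $\alpha\ge0$ and the feasibility condition $\mathbb{E}_{\mathbf{x}\sim\mathcal{X}}[\mathbf{z}\cdot\mathbf{z}^-]<\epsilon$. One also quietly needs non-negativity of the semantic dot products, which is exactly what the topic-simplex codomain of $f_\theta$ supplies. Apart from choosing which elementary bound on the log-sum-exp leaves a provable residual, the rest is routine.
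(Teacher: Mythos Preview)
Your argument uses the same two elementary moves as the paper's: drop the non-negative slack $\alpha\epsilon$, and exploit $\exp a+\beta\exp b\ge\beta\exp b$ (the paper phrases this as ``adding $\exp(\mathbf{z}\cdot\mathbf{z}^+)$ to the denominator'' in its final line). Where you diverge is in how the residual is handled. The paper works pointwise inside the expectation and passes directly from $a-\alpha b+\alpha\epsilon$ to $\log\bigl(e^{a}/(\beta e^{b})\bigr)=a-\alpha-b$, justifying this only by ``as $\alpha,\epsilon\ge0$''; unwound, that step is exactly the inequality $\alpha+(1-\alpha)b\ge0$, which is \emph{not} a consequence of $\alpha,\epsilon\ge0$ alone when $\alpha>1$. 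You instead cancel $\mathbb{E}[a]$, isolate the residual $\alpha(1+\epsilon)+(1-\alpha)\mathbb{E}[b]\ge0$, and close it by a case split that invokes primal feasibility $\mathbb{E}[b]<\epsilon$ and non-negativity of the topic-simplex dot product.

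So the route is essentially the same, but your version is the more complete one: the paper never appeals to feasibility or to $\mathbf{z}\in\Delta^{T-1}$, and its middle inequality is under-justified precisely in the regime $\alpha>1$ that your case split covers. The trade-off is that you are importing hypotheses (the constraint of (\ref{eq:op_prob}) and simplex-valued latents) that the theorem statement does not list explicitly, whereas the paper's write-up reads as if $\alpha,\epsilon\ge0$ suffices. Your ``shorter variant'' using $\log(e^{a}+\beta e^{b})\ge a$ together with $\mathbb{E}[b]<\epsilon$ is cleaner still and avoids the case split entirely; it has no analogue in the paper.
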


\begin{proof} We rewrite the LHS in (\ref{app:inequality})
\begin{equation*}
\begin{split}
    & \mathbb{E}_{\mathbf{x} \sim \mathcal{X}} \left[\log(\exp(\textbf{z} \cdot \textbf{z}^+)) - \alpha \cdot (\log(\exp(\textbf{z} \cdot \textbf{z}^-))-\epsilon)\right] = \mathbb{E}_{\mathbf{x} \sim \mathcal{X}} \left[\log(\exp(\textbf{z} \cdot \textbf{z}^+)) - \alpha \cdot (\log(\exp(\textbf{z} \cdot \textbf{z}^-))) + \alpha \cdot \epsilon\right] \\
    &\geq \mathbb{E}_{\mathbf{x} \sim \mathcal{X}} \left[\log \left(\frac{\exp(\textbf{z} \cdot \textbf{z}^+)}{\beta \cdot \exp(\textbf{z} \cdot \textbf{z}^-)}\right) \right] \quad \text{as} \; \alpha, \epsilon \geq 0  \\
    &\geq \mathbb{E}_{\mathbf{x} \sim \mathcal{X}} \left[\log \left(\frac{\exp(\textbf{z} \cdot \textbf{z}^+)}{\exp(\textbf{z} \cdot \textbf{z}^+) + \beta \cdot \exp(\textbf{z} \cdot \textbf{z}^-)}\right) \right] \quad \text{as} \; \exp(\textbf{z} \cdot \textbf{z}^+) > 0 \\
\end{split}
\end{equation*}
At this point, we conclude our proof. 
\end{proof}

\section{Versions of loss function} We provide the description of versions of loss functions we use in this work.

\textbf{Contrastive approach - Using both positive and negative samples}
\begin{equation}
    \begin{split}
         \mathcal{L}(-\mathbf{x}, \theta, \phi) &= \mathbb{E}_{\mathbf{z} \sim q(\mathbf{z}|\mathbf{x})} \left[-\log(p_{\theta}(\mathbf{x}|\mathbf{z})) + \mathbb{KL} (q_{\theta}(\mathbf{z}|\mathbf{x}) || p(\mathbf{z}))\right] \\ 
         &- \mathbb{E}_{\mathbf{z} \sim q(\mathbf{z}|\mathbf{x})} \left[ \log \frac{\text{exp}(\textbf{z} \cdot \textbf{z}^+)}{\text{exp}(\textbf{z} \cdot \textbf{z}^+) + \beta \cdot \text{exp}(\textbf{z} \cdot \textbf{z}^-)}  \right]
    \end{split}
\end{equation}

\textbf{Contrastive approach - Using only positive sample}
\begin{equation}
    \begin{split}
         \mathcal{L}(\mathbf{x}, \theta, \phi) &= \mathbb{E}_{\mathbf{z} \sim q(\mathbf{z}|\mathbf{x})} \left[-\log(p_{\theta}(\mathbf{x}|\mathbf{z})) + \mathbb{KL} (q_{\theta}(\mathbf{z}|\mathbf{x}) || p(\mathbf{z}))\right] - \mathbb{E}_{\mathbf{z} \sim q(\mathbf{z}|\mathbf{x})} \left[ \textbf{z} \cdot \textbf{z}^+ \right]
    \end{split}
\end{equation}

\textbf{Contrastive approach - Using only negative sample}
\begin{equation}
    \begin{split}
         \mathcal{L}(\mathbf{x}, \theta, \phi) &= \mathbb{E}_{\mathbf{z} \sim q(\mathbf{z}|\mathbf{x})} \left[-\log(p_{\theta}(\mathbf{x}|\mathbf{z})) + \mathbb{KL} (q_{\theta}(\mathbf{z}|\mathbf{x}) || p(\mathbf{z}))\right] + \alpha \cdot \mathbb{E}_{\mathbf{z} \sim q(\mathbf{z}|\mathbf{x})} \left[ \textbf{z} \cdot \textbf{z}^- \right]
    \end{split}
\end{equation}

\section{Understanding number of chosen tokens} We demonstrate the effect of changing the number of tokens chosen for sampling. We perform training with different choices of $k \in \{1, 5, 10, 15, 20, 25, 30 \}$ and record the topic coherence. For visibility, we normalize them to one common scale before plotting them in Fig \ref{app:analysis}. It can be seen that the performance initially increases as we select more tokens from the reconstructed output to substitute for the drawn sample. However, when the number of selected tokens $k$ grows too large, the topic coherence measure starts decreasing as $k$ increases. We hypothesize that the overwhelming number of substituted values can alter the semantic of the positive samples, while producing random negative sample.

\begin{figure}[t]
\centering
\caption{The influence of number of tokens chosen to construct random samples}
    \centering
  \includegraphics[width=0.9\linewidth]{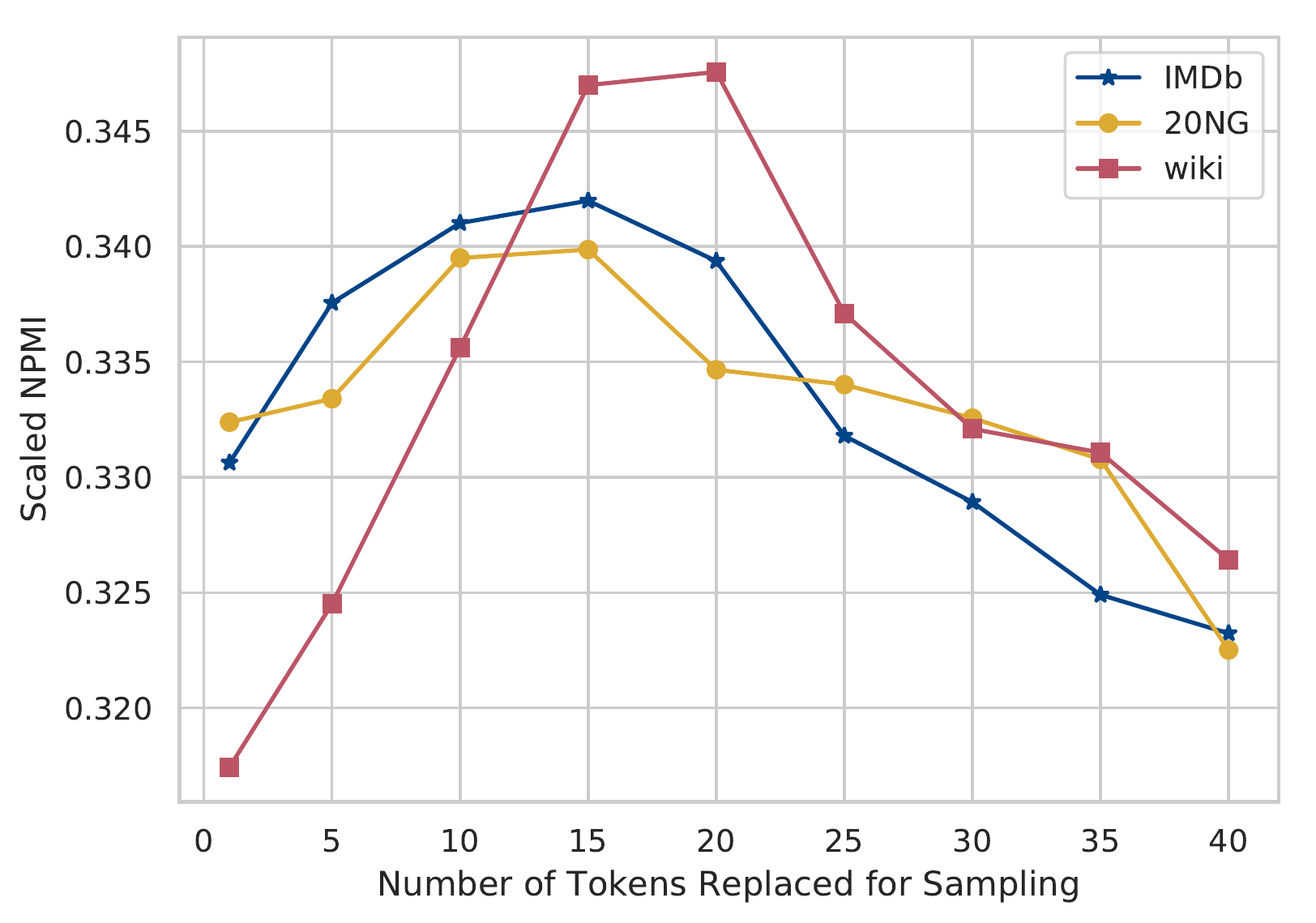} 
\label{app:analysis}
\vspace{-4mm}
\end{figure}

\end{document}